\newtheorem{DEF}{Definition}
\newtheorem{PRO}{Proposition}
\newtheorem{EX}{Example}
\title{REGULARIZED RECOVERY BY MULTI-ORDER PARTIAL HYPERGRAPH TOTAL VARIATION}
\name{Ruyuan Qu \qquad Jiaqi He \qquad Hui Feng \qquad Chongbin Xu \qquad Bo Hu}
\address{School of Information Science and Technology\\
	Fudan University, Shanghai 200438, China\\
	Emails: \{19210720033, 18210720068, hfeng, chbinxu, bohu\}@fudan.edu.cn
}
\begin{document}
	\maketitle
	\begin{abstract}
		Capturing complex high-order interactions among data is an important task in many scenarios. A common way to model high-order interactions is to use hypergraphs whose topology can be mathematically represented by tensors. Existing methods use a fixed-order tensor to describe the topology of the whole hypergraph, which ignores the divergence of different-order interactions. In this work, we take this divergence into consideration, and propose a multi-order hypergraph Laplacian and the corresponding total variation. Taking this total variation as a regularization term, we can utilize the topology information contained by it to smooth the hypergraph signal. This can help distinguish different-order interactions and represent high-order interactions accurately.
	\end{abstract}
	\begin{keywords}
		Hypergraph, tensor, total variation
	\end{keywords}

	\section{Introduction}
	\label{sec:intro}
	
	Graphs are a useful tool to model pairwise interactions in structured datasets \cite{6494675,6808520,8347162}. For an $N$-instance dataset, the interaction between any two instances can be modeled as the edge weight in a graph, while data can be modeled as a graph signal.
	
	A high-order interaction is about influence or similarity among a group, which does not imply that there is a direct pairwise interaction between any two members of the group. In many scenarios, we need to deal with high-order interactions; for instance, interactions among strangers who buy the same product online. Due to the limit that graphs can only describe pairwise interactions, we need another tool to characterize complex high-order data structures appropriately.
	
	As a generalization of graphs, hypergraphs can connect multiple vertices by an edge called hyperedge. Hypergraphs have been well applicable in signal processing to model high-order interactions \cite{7472914,8887197}.
	
	Regularized signal recovery is a handy tool to reconstruct signals over both graphs and hypergraphs \cite{6494675,7605501,9001176,9146698}. Since we model data based on the similarity among them, we assume that signals evolve smoothly over the topology, which means signals are similar among neighboring vertices. Therefore, the smoothness measure of signals can be utilized as a regularization term in signal recovery.

	To represent the topology of hypergraphs, various tensor forms have been proposed in \cite{COOPER20123268,2017Spectra,2018On} which represent a hypergraph by a fixed-order tensor. Specifically, all hyperedges in an unweighted hypergraph can be represented by nonzero elements in a fixed-order tensor whose indices denote vertices in each hyperedge.
	
	However, it is tough to choose a suitable-order tensor to represent the topology of a hypergraph containing varying-cardinality hyperedges. Existing work chooses the maximum cardinality of hyperedges denoted by $M$ as the tensor order by adding vertices into hyperedges whose cardinalities are less than $M$ in \cite{2017Spectra}. This will introduce higher-order nonlinear relationships to our work, which may ignore the divergence of different-order interactions, and may bring difficulty to subsequent computations especially when cardinalities vary greatly. Moreover, choosing $M$ as the tensor order can not ensure the positive semidefiniteness of the Laplacian tensor, thus we can not propose a corresponding total variation form to measure the smoothness of the hypergraph signal.
	
	Instead, we obtain a new hypergraph Laplacian by decomposing a general hypergraph into a set of uniform partial hypergraphs and representing each of them by a Laplacian tensor. Based on the positive semidefiniteness of even order Laplacian tensors, we propose a total variation form to measure the smoothness of hypergraph signals, and recover hypergraph signals by means of the Laplacian regularization. In this way, we may distinguish the effect of different-order interactions and obtain a quite direct mathematical form of high-order interactions.
	
	\section{Multi-order Partial Decomposition of Hypergraph}
	\label{sec:multiorder}
	
	\subsection{Tensors of Hypergraph}
	\label{ssec:dhypergraph}
	
	A hypergraph is a pair $\mathcal{H=(V,E)}$, where $\mathcal{V}=\left\lbrace v_1,\cdots,v_N\right\rbrace$ is a vertex set, and $\mathcal{E}=\left\lbrace\bm{e}_1,\cdots,\bm{e}_K\right\rbrace $ is a hyperedge set. Each element of $\mathcal{E}$ is a nonempty multi-element subset of $\mathcal{V}$ called hyperedge. The cardinality of each hyperedge $\bm{e}_k$ is denoted by $|\bm{e}_k|$  which is the number of vertices in it.
	
	To distinguish different-order interactions in a hypergraph, we first decompose hypergraph $\mathcal{H}$ into a partial hypergraph set $\mathcal{PH}\coloneqq\left\lbrace\mathcal{H}_{(c)},c\in\mathcal{C}\right\rbrace$, where $\mathcal{C}$ is the hyperedge cardinality set of a hypergraph, and each partial hypergraph $\mathcal{H}_{(c)}=(\mathcal{V},\mathcal{E}_{(c)})$ consists of all vertices and all $c$-cardinality hyperedges in hypergraph $\mathcal{H}$.
	
	\begin{EX}
		\label{example1}
		For hypergraph $\mathcal{H}$ in Fig. \ref{fig:hypergraphexample}, there are 4 hyperedges $\bm{e}_1$, $\bm{e}_2$, $\bm{e}_3$, $\bm{e}_4$ with cardinalities 3, 4, 2, 2 respectively.
		We can obtain 3 partial hypergraphs $\mathcal{H}_{(2)}$ containing $\bm{e}_3$, $\bm{e}_4$, $\mathcal{H}_{(3)}$ containing $\bm{e}_1$ and $\mathcal{H}_{(4)}$ containing $\bm{e}_2$, and all those partial hypergraphs consist of the whole vertex set $\mathcal{V}$.
	\end{EX}

	\begin{figure}[t]
		\centering
		\includegraphics[width=0.9\linewidth]{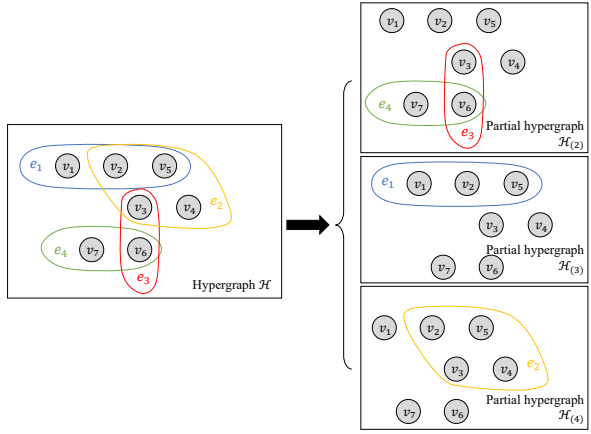}
		\caption{An example of multi-order partial hypergraph decomposition.}
		\label{fig:hypergraphexample}
	\end{figure}
	
	Tensors are a natural tool to describe high-order interactions mathematically due to the flexibility of the tensor order. To represent hypergraph $\mathcal{H}$, we define a new hypergraph Laplacian as a Laplacian tensor set $\mathcal{L}\coloneqq\left\lbrace\mathbf{L}_{(c)},c\in\mathcal{C}\right\rbrace$ whose element $\mathbf{L}_{(c)}$ contains the topology information of the $c$-uniform partial hypergraph $\mathcal{H}_{(c)}$. For $\forall c\in\mathcal{C}$,  the adjacency tensor $\mathbf{A}_{(c)}$ and Laplacian tensor $\mathbf{L}_{(c)}$ of the partial hypergraph $\mathcal{H}_{(c)}$ are formulated as follow according to \cite{COOPER20123268,2017Spectra}.
	
	\begin{DEF}[Adjacency tensor]
		\label{defat}
		An $N$-vertex partial hypergraph $\mathcal{H}_{(c)}=(\mathcal{V},\mathcal{E}_{(c)})$ can be represented by a $c$th-order $N$-dimension adjacency tensor defined as
		\begin{equation}
		\label{eqA1}
		\mathbf{A}_{(c)}=(a_{i_1,\cdots,i_c}), 1\leq i_1,\cdots,i_c\leq N.
		\end{equation}
		For each hyperedge $\bm{e}=\left\lbrace v_{l_1},v_{l_2},\cdots,v_{l_c}\right\rbrace \in\mathcal{E}_{(c)}$, element
		\begin{equation}
		\label{eqA2}
		a_{i_1,\cdots,i_c}=\frac{1}{(c-1)!},
		\end{equation}
		where $i_1,\cdots,i_c$ are all permutations of $l_1,\cdots,l_c$.
		The other elements of $\mathbf{A}_{(c)}$ are zero.
	\end{DEF}

	\begin{DEF}[Laplacian tensor]
		\label{deflt}
		The Laplacian tensor of an $N$-vertex partial hypergraph $\mathcal{H}_{(c)}=(\mathcal{V},\mathcal{E}_{(c)})$ is defined as
		$\mathbf{L}_{(c)}=\mathbf{D}_{(c)}-\mathbf{A}_{(c)}$, where the degree tensor $\mathbf{D}_{(c)}$ is a $c$th-order $N$-dimension diagonal tensor with entries $d_{i,\cdots,i}=\sum_{i_2,\cdots,i_c=1}^{N}a_{i,i_2,\cdots,i_c}$ denoting the number of hyperedges in $\mathcal{E}_{(c)}$ containing vertex $v_i$ and the others equal to zero.
	\end{DEF}
	
	
	\subsection{Positive Semidefinite Tensor}
	\label{ssec:dtensor}

	\begin{PRO}
		\label{pro1}
		An even order Laplacian tensor $\mathbf{L}$ with $M$ orders and $N$ dimensions is positive semidefinite, i.e. for $\forall\bm{f}\in\mathbb{R}^N$,
		\begin{equation}
		\begin{split}
		\label{eqPOLYN1}
		\mathbf{L}\bm{f}^M&\coloneqq\mathbf{L}\times_1\bm{f}^T\times_2\cdots\times_M\bm{f}^T\\&\;=\sum_{i_1,i_2,\cdots,i_M=1}^{N}l_{i_1,i_2,\cdots,i_M}f_{i_1}f_{i_2}\cdots f_{i_M}\geq 0,
		\end{split}
		\end{equation}
		where $\times_n$ is the $n$-mode product \cite{Kolda2009Tensor} of the tensor $\mathbf{L}$ with the vector $\bm{f}^T$.
	\end{PRO}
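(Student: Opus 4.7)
The plan is to expand $\mathbf{L}\bm{f}^M = \mathbf{D}\bm{f}^M - \mathbf{A}\bm{f}^M$, isolate the contribution of each individual hyperedge to this homogeneous polynomial, and then show each such contribution is nonnegative via the AM-GM inequality, using crucially that $M$ is even.

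First I would unpack the two tensors on a single hyperedge $\bm{e}=\{v_{l_1},\dots,v_{l_M}\}\in\mathcal{E}_{(M)}$. By Definition \ref{defat}, $\mathbf{A}_{(M)}$ has entry $1/(M-1)!$ at each of the $M!$ permutations of $(l_1,\dots,l_M)$, so this hyperedge's contribution to $\mathbf{A}\bm{f}^M$ is $\frac{M!}{(M-1)!}\,f_{l_1}\cdots f_{l_M}=M\prod_{j=1}^{M}f_{l_j}$. By Definition \ref{deflt}, each vertex $v_{l_j}\in\bm{e}$ receives a contribution of exactly $1$ to its diagonal entry $d_{l_j,\dots,l_j}$ from this hyperedge (from the $(M-1)!$ permutations fixing the first index at $l_j$, each weighted by $1/(M-1)!$), so the hyperedge contributes $\sum_{j=1}^{M}f_{l_j}^{M}$ to $\mathbf{D}\bm{f}^M$.

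Summing over all hyperedges in $\mathcal{E}_{(M)}$, I obtain
\begin{equation*}
\mathbf{L}\bm{f}^M \;=\; \sum_{\bm{e}\in\mathcal{E}_{(M)}}\Bigg(\sum_{j=1}^{M} f_{l_j}^{M} \;-\; M\prod_{j=1}^{M} f_{l_j}\Bigg).
\end{equation*}
Now I would show each bracketed term is nonnegative. Since $M$ is even, $f_{l_j}^M=|f_{l_j}|^M\ge 0$, and the AM-GM inequality gives
\begin{equation*}
\frac{1}{M}\sum_{j=1}^{M} f_{l_j}^{M} \;=\; \frac{1}{M}\sum_{j=1}^{M} |f_{l_j}|^{M} \;\ge\; \prod_{j=1}^{M} |f_{l_j}| \;\ge\; \prod_{j=1}^{M} f_{l_j},
\end{equation*}
so every hyperedge contributes a nonnegative summand, and $\mathbf{L}\bm{f}^M\ge 0$ follows.

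The only delicate point — and the place where the assumption of evenness is actually used — is the final inequality $\prod_j |f_{l_j}|\ge\prod_j f_{l_j}$ combined with the identification $f_{l_j}^M=|f_{l_j}|^M$; both would fail for odd $M$, where the right-hand cross term could be larger in magnitude than the diagonal terms and of either sign. Beyond that, the main obstacle is purely bookkeeping: correctly tracking the $1/(M-1)!$ weight together with the $M!$ permutations so that the cross term appears with coefficient exactly $M$, matching the $M$ diagonal terms needed for AM-GM to close the bound tightly.
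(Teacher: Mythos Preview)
Your argument is correct and fully self-contained: the hyperedge-by-hyperedge decomposition is exactly right, the constants work out as you computed, and the AM-GM step together with $f^M=|f|^M$ for even $M$ closes the inequality cleanly.

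The paper takes a different and less explicit route. It observes from Definition~\ref{deflt} that each diagonal entry $l_{i,\dots,i}$ equals the sum of the absolute values of the off-diagonal entries in the $i$th slice, i.e.\ the Laplacian tensor is diagonally dominated, and then invokes a general theorem of Qi (\cite{QI2014303}, Theorem~3) stating that any even-order diagonally dominated real symmetric tensor is positive semidefinite. So the paper outsources the actual analytic work to a cited black-box result, whereas you carry it out directly. Your approach has the advantage of being elementary and of exposing exactly where evenness enters; the paper's approach is shorter on the page and situates the result within a known tensor-spectral framework. Substantively, the diagonal-dominance argument and your AM-GM argument are close cousins: Qi's theorem is itself proved by an inequality of the same flavor, and your per-hyperedge identity $\sum_j f_{l_j}^M - M\prod_j f_{l_j}\ge 0$ is precisely the concrete instance of that theorem specialized to the Laplacian structure.
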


	\begin{proof}
		According to Definition \ref{deflt}, as the degree of each vertex in partial hypergraph $\mathcal{H}_{(c)}$, each diagonal entry of $\mathbf{L}_{(c)}$ equals the sum of the absolute values of all off-diagonal entries in the corresponding slice of $\mathbf{L}_{c}$ given by
		\begin{equation}
		\begin{split}
		\label{eqLddt}
		&l_{i,\cdots,i}=d_{i,\cdots,i}=\sum_{i_2,\cdots,i_M=1}^{N}a_{i,i_2,\cdots,i_M}\\&=\sum_{i_2,\cdots,i_M} |l_{i,i_2,\cdots,i_M}|,\;\forall(i_2,\cdots,i_M)\neq(i,i,\cdots,i).
		\end{split}
		\end{equation}
		Therefore, a Laplacian tensor is diagonally dominated \cite{QI2014303}.
		By Theorem 3 in \cite{QI2014303}, as a diagonally dominated real symmetric tensor, an even order Laplacian tensor is positive semidefinite.
	\end{proof}

	\section{The Laplacian Regularization Estimation}
	\label{sec:LRM}
	
	\subsection{Problem Formulation}
	
	Consider an $N$-vertex hypergraph $\mathcal{H=\left( V,E\right)}$ with a real-valued hypergraph signal defined on $\mathcal{V}$ which can be represented as a vector $\bm{f}\in\mathbb{R}^N $. The $i$th element of the hypergraph signal $f_i$ represents the signal at the $i$th vertex in $\mathcal{V}$.
	
	Suppose that $S$ observations and the corresponding sampled vertex set $\mathcal{S}=\left\lbrace n_s,s=1,\cdots,S\right\rbrace$ are available. Then the observation model can be summarized as
	\begin{equation}
	\label{eqPF1}
	\bm{y}=\mathbf{\Psi}\bm{f}+\bm{\omega},
	\end{equation}
	where $\bm{y}=\left[ y_1,\cdots,y_S\right] ^T\in\mathbb{R}^S$ is the observation vector, $\mathbf{\Psi}\in\mathbb{R}^{S\times N}$ is the sampling operator with entries $\psi_{s,n_s}=1,\,s=1,\cdots,S$ and the others equal to zero, and $\bm{\omega}=\left[ \omega_1,\cdots,\omega_S\right] ^T$ is the noise vector. Given observations $\bm{y}$ and $\mathbf{\Psi}$, and the topology of hypergraph $\mathcal{H}$, our goal is to estimate signals at both observed and unobserved vertices.
	
	The hypergraph signal estimator can be obtained by the multi-order Laplacian regularization estimation (LRE) which solves a functional minimization problem formulated as
	\begin{equation}
	\label{eqOPTIM1}
	\hat{\bm{f}}=\mathop{\arg\min}_{\bm{f}\in\mathbb{R}^{N}}\ell(\mathbf{\Psi},\bm{f},\bm{y})+\lambda{\rm TV}(\bm{f}),
	\end{equation}
	where the loss function $\ell$ measures the error between estimators and observations, the regularization term $\rm TV$ is used to smooth the signal estimator and avoid overfitting based on the hypergraph topology, and the nonnegative parameter $\lambda$ determines preference between the loss and the smoothness. The detailed form of the regularization term $\rm TV$ will be given in subsection \ref{ssec:TV}.
	
	\subsection{Multi-order Total Variation over a Hypergraph}
	\label{ssec:TV}
	
	We measure signal differences among all vertices in each hyperedge by using high-order interactions directly rather than utilizing pairwise interactions extracted from hypergraphs \cite{Agarwal06higherorder,10.5555/2999792.2999883,10.1137/19M1291601,9001176}. We define the initial form of the total variation as
	\begin{equation}
	\label{eqLxM1}
	{\rm TV}(\bm{f})\coloneqq\sum_{c\in\mathcal{C}}{\rm TV}_{(c)}(\bm{f})=\sum_{c\in\mathcal{C}}\mathbf{L}_{(c)}{\bm{f}}^c,
	\end{equation}
	where ${\rm TV}_{(c)}$ denotes the total variation of signals among all $c$th-order interactions.
	
	However, to ensure the nonnegative property of the smoothness measure, we need to use only even order Laplacian tensors to represent the topology of the hypergraph.
	
	We first pretreat all odd-cardinality hyperedges by adding a vertex to each of them respectively to make their cardinalities even. Instead of choosing an existing vertex from the hyperedge as in \cite{2017Spectra}, we add an auxiliary vertex that $\mathcal{V}$ does not consist of to the hyperedge and define its signal as the arithmetic mean of signals at all vertices belonging to $\mathcal{V}$ in this hyperedge.
	
	\begin{EX}
		\label{example2}
		As the only odd-cardinality hyperedge in hypergraph $\mathcal{H}$ in Example \ref{example1}, $\bm{e}_1$ needs to be added a vertex $v_{aux}$ into to make its cardinality even as shown in Fig. \ref{fig:hypergraph-example-2}.
		The signal at $v_{aux}$ is denoted by $f(v_{aux})=\frac{f_1+f_2+f_5}{3}$.
	\end{EX}
	
	\begin{figure}[t]
		\centering
		\includegraphics[width=0.9\linewidth]{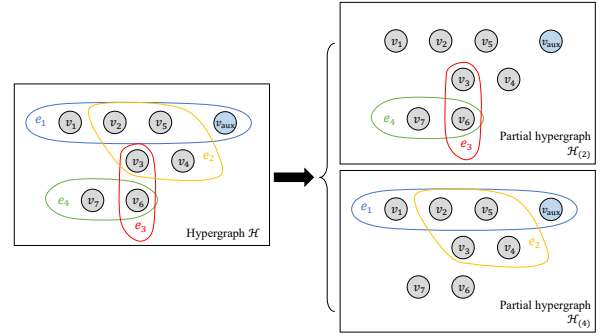}
		\caption{An example of multi-order partial hypergraph decomposition after pretreatment.}
		\label{fig:hypergraph-example-2}
	\end{figure}

	By pretreating the hypergraph, we can obtain an even cardinality set $\mathcal{C}$, a hypergraph Laplacian $\mathcal{L}=\left\lbrace\mathbf{L}_{(c)},c\in\mathcal{C}\right\rbrace$ containing only even order tensors, and a transformation matrix $\mathbf{T}$ to pretreat the hypergraph and its signal.
	
	Thus, the total variation \eqref{eqLxM1} can be rewritten as
	\begin{equation}
	\label{eqLxM}
	\begin{aligned}
	{\rm TV}(\bm{f})=\sum_{c\in\mathcal{C}}\mathbf{L}_{(c)}{(\mathbf{T}\bm{f})}^c=\sum_{c\in\mathcal{C}}\mathbf{L}_{(c)}{\tilde{\bm{f}}}^c.
	\end{aligned}
	\end{equation}
	Suppose the number of auxiliary vertices is $t$. The new hypergraph signal $\tilde{\bm{f}}=\mathbf{T}\bm{f}={[\bm{f}^T,\bm{f}_{aux}^T]}^T\in\mathbb{R}^{N+t}$ consists of signals at both vertices in $\mathcal{V}$ and auxiliary vertices. The transformation matrix $\mathbf{T}={[\mathbf{t}_1,\cdots,\mathbf{t}_{N+t}]}^T={[\mathbf{I}_N,\mathbf{T}_{aux}^{T}]}^T\in\mathbb{R}^{(N+t)\times N}$ where $\mathbf{t}_i^T\in\mathbb{R}^{1\times N}$ is the $i$th row of $\mathbf{T}$, and $\mathbf{I}_N$ is an $N\times N$ identity matrix. For $N+1\leq i\leq N+t$, suppose that auxiliary vertex $v_i$ belongs to a $c$-cardinality hyperedge $\bm{e}=\left\lbrace v_i,v_{n_1},\cdots,v_{n_{c-1}}\right\rbrace$ where vertices $v_{n_1},\cdots,v_{n_{c-1}}\in\mathcal{V}$. We have ${(\mathbf{t}_i)}_{n_j}=\frac{1}{c-1}$ for $j=1,\cdots,c-1$ and the remaining entries of $\mathbf{t}_i$ equal to 0 to make signal $\tilde{f}_i$ equal to the arithmetic mean of signals at the other $c-1$ vertices.
	
	\begin{EX}
		\label{example3}
		As shown in Fig. \ref{fig:hypergraph-example-2}, the total variation \eqref{eqLxM} over hypergraph $\mathcal{H}$ in Example \ref{example1} and Example \ref{example2} is
		\begin{equation*}
		\begin{aligned}
		{\rm TV}(\bm{f})&={\rm TV}_{(2)}(\bm{f})+{\rm TV}_{(4)}(\bm{f})\\&=\mathbf{L}_{(2)}{(\mathbf{T}\bm{f})}^2+\mathbf{L}_{(4)}{(\mathbf{T}\bm{f})}^4,
		\end{aligned}
		\end{equation*}
		where $\bm{f}\in\mathbb{R}^{7}$, $\mathbf{T}=
		\begin{bmatrix}
		&&&\mathbf{I}_7&&&\\
		\frac{1}{3}&\frac{1}{3}&0&0&\frac{1}{3}&0&0\\
		\end{bmatrix}\in\mathbb{R}^{8\times 7}$, and $\mathbf{L}_{(2)}$ and $\mathbf{L}_{(4)}$ are 8-dimension.
		This form only changes the odd cardinality in a small range by adding 1 to make it even. Specifically, the polynomial form of ${\rm TV}_{(4)}$ is
		\begin{equation*}
		\begin{aligned}
		{\rm TV}_{(4)}=&(f_1^4+f_2^4+f_5^4+\left( \frac{f_1+f_2+f_5}{3}\right) ^4-\\&4f_1f_2f_5\frac{f_1+f_2+f_5}{3}) +\\&(f_2^4+f_3^4+f_4^4+f_5^4-4f_2f_3f_4f_5).
		\end{aligned}
		\end{equation*}
		
		Another total variation form of this hypergraph based on a fixed-order Laplacian in \cite{2017Spectra} is given by
		\begin{equation*}
		\begin{aligned}
		\mathbf{L}\bm{f}^4=&f_3^4+f_6^4-\frac{2}{7}(2f_3f_6^3+3f_3^2f_6^2+2f_3^3f_6)+\\&f_6^4+f_7^4-\frac{2}{7}(2f_6f_7^3+3f_6^2f_7^2+2f_6^3f_7)+\\&f_1^3+f_2^3+f_5^3-(f_1^2f_2f_5+f_1f_2^2f_5+f_1f_2f_5^2)+\\&f_2^4+f_3^4+f_4^4+f_5^4-4f_2f_3f_4f_5.
		\end{aligned}
		\end{equation*}
		This form represents a hyperedge $\bm{e}$ whose cardinality $c$ is less than the maximum cardinality of all hyperedges $M$ by an $M$th-order tensor. The cardinality of the hyperedge $\bm{e}$ needs to change to $M$ by choosing $M-c$ vertices in all possible ways from $\bm{e}$ and adding them into $\bm{e}$. If $M$ is much greater than $c$, the cardinality of the hyperedge changes too much, and the number of all choosing ways grows quite fast.
		
	\end{EX}
	
	When signals at each vertex and all its neighbors are equal, the total variation \eqref{eqLxM} is zero. If there exist multiple signal amplitudes differing greatly in a hyperedge, the total variation will be large, which indicates that the hypergraph signal is not smooth. Especially for a nonnegative hypergraph signal, taking this total variation as a regularization term can smooth the hypergraph signal by strongly penalizing signals which change greatly among multiple neighboring vertices.
	
	
	Using this total variation over a hypergraph can help capture high-order interactions, because we take the divergence of signal smoothness caused by different-cardinality hyperedges into consideration.
	Moreover, we retain the property that the auxiliary vertices only change the signal transmission within a hyperedge and have no influence on the way of signal transmission among different hyperedges.

	\subsection{A Gradient Descent Algorithm}
	
	We can get the estimator $\hat{\bm{f}}$ by using gradient descent method to solve \eqref{eqOPTIM1} formulated as
	\begin{equation}
	\begin{split}
	\label{eqgrad}
	&\bm{f}_{k+1}=\bm{f}_k-\eta\frac{\partial(\ell(\mathbf{\Psi},\bm{f},\bm{y})+\lambda{\rm TV}(\bm{f}))}{\partial\bm{f}}\bigg|_{\bm{f}=\bm{f}_k},
	\end{split}
	\end{equation}
	where parameter $\eta>0$ is the stepsize, and the gradient of ${\rm TV}(\bm{f})$ taking the form \eqref{eqLxM} can be calculated as in \cite{QI20051302} applying the $n$-mode products \cite{Kolda2009Tensor} to the tensor $\mathbf{L}_{(c)}$ with the vector $(\mathbf{T}\bm{f})^T$ among the last $c-1$ modes given by 
	\begin{equation}
		\begin{split}
		\frac{\partial{\rm TV}(\bm{f})}{\partial\bm{f}}=\sum_{c\in\mathcal{C}}c\mathbf{T}^T\left( \mathbf{L}_{(c)}{(\mathbf{T}\bm{f})}^{c-1}\right).
		\end{split}
	\end{equation}

	\section{Numerical Results}
	\label{sec:empirical}
	
	We provide a simulation example for $\lambda=0.001$ over the zoo dataset \cite{Dua:2019} which consists of 101 instances and 17 features for each instance.
	
	We can generate an $N$-vertex hypergraph by choosing $N$ instances randomly from the dataset. We choose feature whether animals have feathers as the hypergraph signal by setting signal value 0.95 for true and 0.05 for false, and suppose that all observations are noiseless. The other 16 features are used to generate the hypergraph topology. If instances contain the attribute for a Boolean feature or behave the same in a multiple-value feature, we will put the corresponding vertices in a hyperedge. We then sample vertices randomly from the $N$ vertices according to the fraction of observations.
	
	Given the hypergraph topology and observations, we can use LRE by choosing the cross entropy function as the loss $\ell$ to estimate the signal. After obtaining the estimator $\hat{\bm{f}}$, we recover the signal by setting a threshold 0.5 and classifying vertices into two parts. We compare this recovery accuracy of unobserved signals with the results of LP-GSP \cite{8347162,2002Learning} and LP-HGSP \cite{8887197}. LP-GSP represents data structure by a graph and propagates labels over the graph based on the proximity of vertices. LP-HGSP uses fixed-order adjacency tensors to represent hypergraphs and processes signals by utilizing CP-ORTHO decomposition \cite{afshar2017cp} which aims to decompose a tensor into rank-one tensors as orthogonal as possible.
	
	We set $N=30$ and fractions of observations from 0.4 to 0.7. By taking the average of 10000-trial results, we obtain the accuracy of unobserved signal recovery. As shown in Fig. \ref{fig:sig9505fraction7n30compare2methods}, LRE outperforms LP-GSP and LP-HGSP, which indicates the total variation \eqref{eqLxM} contains more appropriate high-order interaction information than both matrix representations of graphs and fixed-order tensor representations of hypergraphs in this situation. LP-HGSP performs worse than LP-GSP probably because of the uncertainty of CP-ORTHO decomposition. So by multi-order decomposition of hypergraph, LRE can utilize topology information better in some ways.
	
	\begin{figure}[t]
		\centering
		\includegraphics[width=0.7\linewidth]{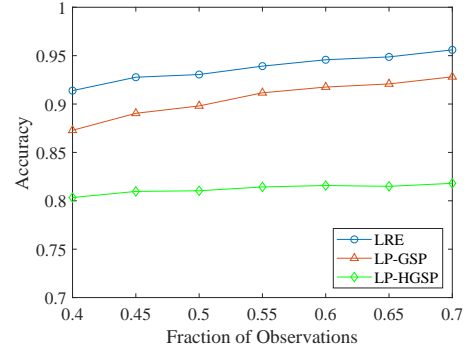}
		\caption{Performance of unobserved signal recovery with different fractions of observations.}
		\label{fig:sig9505fraction7n30compare2methods}
	\end{figure}

	
	\section{Conclusion}
	\label{sec:conclusion}
	
	In this work, we propose a hypergraph Laplacian in the form of a multi-order Laplacian tensor set and define the corresponding total variation over a hypergraph, which can help distinguish different-order interactions and capture high-order interactions in some ways.
	We use LRE to recover the hypergraph signal by taking the total variation as a regularization term.
	We also provide a numerical example to illustrate the advantages of LRE.
	All these facts indicate that LRE is a useful and feasible method in hypergraph signal recovery.
	
	\section{Acknowledgment}
	This work was supported by the Shanghai Municipal Natural Science Foundation (No. 19ZR1404700), the 2020 Okawa Foundation Research Grant, and the Fudan-Zhuhai Innovation Institute.
	
	\vfill\pagebreak
	
	\bibliographystyle{IEEEbib}
	\bibliography{refs}
	
\end{document}